\documentclass{article}

\usepackage{microtype}
\usepackage{graphicx}
\usepackage{subcaption}
\usepackage{booktabs} 

\usepackage{hyperref}

\usepackage[accepted]{icml2026}

\usepackage[table]{xcolor}
\usepackage{booktabs}
\usepackage{xcolor}
\usepackage{siunitx}
\usepackage{tabularray}
\usepackage{multirow}
\usepackage{subcaption}

\def\ie{{\em i.e.}}
\def\eg{{\em e.g.}}

\usepackage{amsmath}
\usepackage{amssymb}
\usepackage{mathtools}
\usepackage{amsthm, bm}

\usepackage[capitalize,noabbrev]{cleveref}

\theoremstyle{plain}
\newtheorem{theorem}{Theorem}[section]
\newtheorem{proposition}[theorem]{Proposition}

\theoremstyle{definition}
\newtheorem{definition}[theorem]{Definition}
\newtheorem{assumption}[theorem]{Assumption}
\theoremstyle{remark}
\newtheorem{remark}[theorem]{Remark}

\usepackage[textsize=tiny]{todonotes}

\icmltitlerunning{TD-VAD: Breaking Visual Dependence in Video Anomaly Detection with Text-Driven Learning}

\begin{document}

\twocolumn[
  \icmltitle{TD-VAD: Breaking Visual Dependence in Video Anomaly Detection with Text-Driven Learning}



  \icmlsetsymbol{equal}{*}

  \begin{icmlauthorlist}
    \icmlauthor{Shuangqing Zhang}{1nju}
    \icmlauthor{Lei-Lei Ma}{3hfit}
    \icmlauthor{Zhao Wang}{2comp}
    \icmlauthor{Wen Dong}{1nju}
    \icmlauthor{Xinyi Xu}{1nju} \\
    \icmlauthor{Guo-Sen Xie}{4njust}
    \icmlauthor{Caifeng Shan}{1nju}
    \icmlauthor{Fang Zhao}{1nju}
  \end{icmlauthorlist}

  \icmlaffiliation{1nju}{School of Intelligence Science and Technology, Nanjing University, Suzhou 215163, China.}
  \icmlaffiliation{2comp}{China Mobile Zijin Innovation Institute, Nanjing 211800, China.}
  \icmlaffiliation{3hfit}{School of Artificial Intelligence Engineering, Hefei Institute of Technology, Hefei 230001, China.}
  \icmlaffiliation{4njust}{ School of Computer Science and Engineering, Nanjing University of Science and Technology, Nanjing 210014, China}

  \icmlcorrespondingauthor{Fang Zhao}{fzhao@nju.edu.cn}

  \icmlkeywords{Machine Learning, ICML}

  \vskip 0.3in
]



\printAffiliationsAndNotice{}  

\begin{abstract}
  Visual data is typically a prerequisite for training existing video anomaly detection (VAD) methods. However, obtaining sufficient annotated anomaly data for training is challenging and not scalable due to the rarity of anomaly data and the wide variety of abnormal events. In this work, we advocate that the effectiveness of treating texts as video sequences for the VAD model and propose a novel \textbf{T}ext-\textbf{D}riven  \textbf{V}ideo \textbf{A}nomaly \textbf{D}etection (\textbf{TD-VAD}) approach to break visual dependence. In contrast to the anomaly video data, text descriptions of abnormal events are easy to collect, and their class labels can be directly derived. Specifically, our method utilizes video-like text descriptions with temporal characteristics generated by LLM to train a VAD model, without any reliance on target-domain anomaly data. To capture the long and short-range temporal logic of events, we design the event evolution causal attention module to model contextual dependencies across time. 
During inference, considering the domain gap between the texts and video sequences, we use the frozen CLIP encoder to extract embeddings of video frames to align the text modality while retaining crucial visual information. Comprehensive experiments on two large-scale VAD datasets, XD-Violence and UCF-Crime, demonstrate that our method outperforms prior one-class and unsupervised VAD methods by a large margin. 
\end{abstract}

\begin{figure}[t]
  \centering

   \includegraphics[width=1\linewidth]{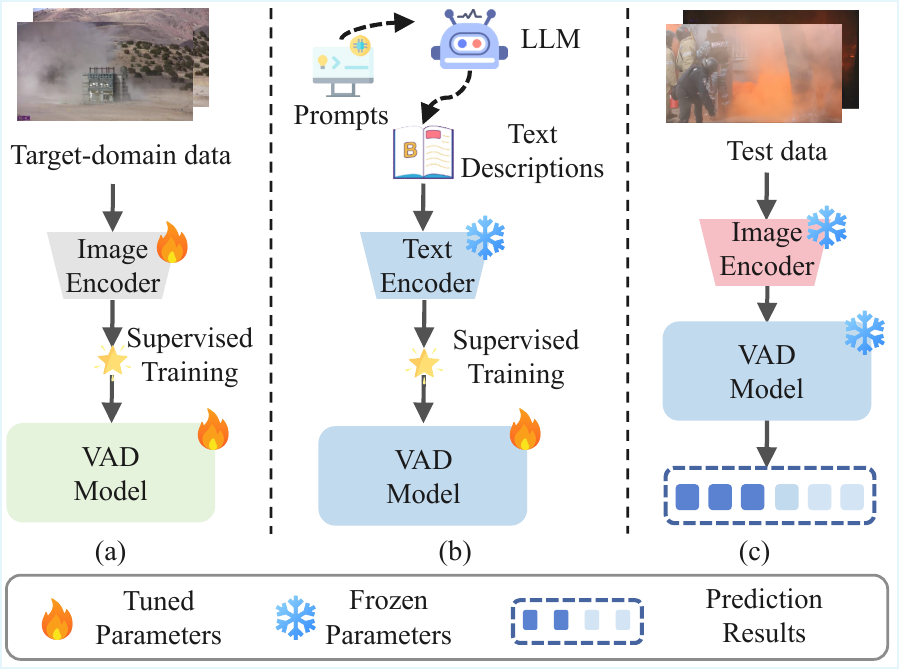}

   \caption{A comparison between prior VAD methods and the proposed approaches. (a) Prior methods use huge annotated domain-specific video data to train the VAD model. (b) The proposed approach only uses the easily-accessible text descriptions of anomaly categories to train the VAD model. (c) During testing, the proposed TD-VAD can be applied to video sequences directly.  }
   
   \label{fig:abs}
  \vspace{-4mm}
\end{figure}

\section{Introduction}
\label{sec:intro}

Video anomaly detection (VAD) aims to identify abnormal events automatically within video sequences and has attracted significant attention due to its overcoming the impracticality of manual monitoring in surveillance security~\cite{sultani2018real}, and industrial inspection~\cite{duan2025anomalycontrol}. 
Prior works have mainly focused on one-class~\cite{hasan2016learning, lu2013abnormal} or weak-supervised~\cite{ tian2021weakly, wu2022self, zhang2026contextual} VAD tasks, wherein the former typically fit a model to the normal videos and treat any deviations from it as anomalous, and the latter needs to provide frame-level anomaly confidences with video-level annotations merely. 

Existing VAD approaches typically assume that training video examples in the target domain are available for learning the VAD models. However, the assumption may not hold in various scenarios, such as i) when training data violates the data privacy policies (\eg, to protect the sensitive information)~\cite{wang2025federated}, or ii) when the target domain does not have relevant video data (\eg, the violence events in the city)~\cite{wu2020not}. Vision-free video anomaly detection is an emerging task for VAD in such scenarios, to which the aforementioned VAD approaches, including weakly-supervised, one-class, or unsupervised paradigms~\cite{thakare2023rareanom, zhang2025autoregressive}, are not viable, as it requires VAD models to detect anomaly events in video sequences without any video example in as target dataset, imposing prohibitive time and cost burdens when target-domain samples are unavailable. The challenge of anomaly data collection led us to explore a novel research question: \textit{Can we develop a VAD method without using visual data in training?}

In this paper, we aim to answer this challenging question. Developing a vision-free VAD model is hard due to the lack of explicit visual priors about anomaly events. Recently, large pre-trained vision-language models (VLMs)~\cite{liu2023visual} and large language models (LLMs)~\cite{touvron2023llama} have demonstrated strong generalization capability in various vision tasks, including anomaly detection~\cite{jeong2023winclip, zhou2023anomalyclip}. Such visual priors for anomaly events might be drawn using large foundation models, renowned for their generalization capability and wide knowledge encapsulation. Generally speaking, compared with the collection of large-scale annotated anomaly video datasets, it is significantly easier to collect large amounts of natural text descriptions generated from LLMs. However, there are several critical challenges that need to be addressed. First, video is a continuous stream of visual frames that relies on temporal dynamics, which hinders the knowledge transfer from the concise texts to the video domain. Second, the modality gap between the text and video domains further degrades the generalization. If we can find a cohesive alignment of semantics across video and text modalities, then we can explore the information in text data to promote the performance of VAD in the video space. The aligned embedding space of pre-trained VLMs provides a possible solution for this challenge. Therefore, we investigate the potential of combining existing LLMs with VLMs in addressing vision-free VAD.

 On top of our preliminary findings, we propose a novel \textbf{T}ext-\textbf{D}riven \textbf{V}ideo \textbf{A}nomaly \textbf{D}etection (\textbf{TD-VAD}) approach to break visual dependence, which treats text descriptions as alternatives to videos to train the VAD model.
The main idea of the proposed framework is illustrated in Fig.~\ref{fig:abs}. Specifically, we exploit an LLM such as DeepSeek-V3~\cite{liu2024deepseek} to generate textual descriptions for all anomaly categories in a target dataset. Compared with collecting textual data on the web, generating data using LLM is more straightforward and requires less manual post-processing. In particular, for the challenge that texts lack temporal information, we use LLM to generate text descriptions with temporal logic that include both precise timestamps and detailed event descriptions to simulate the temporal characteristics of video sequences. Additionally, considering that only training the VAD model with text data will lead to a modality gap between texts and video frames, we take full advantage of the pre-trained CLIP model to extract both the text and visual embeddings to utilize its strong alignment space to reduce the impact of the modality gap. Moreover, to capture the long and short-range temporal logic of events from the generated video-like text descriptions, we design a novel event-evolution causal attention (EC-Attn) module to model contextual dependencies across time. This module contains an event-context causal attention (ECC-Attn) and an event-focus causal attention (EFC-Attn), wherein the former models the long-range temporal logic of full text sequences to capture the evolution chain, and the latter focuses on supplementing short-window action details. Moreover, a hierarchical anomaly-aware branch is designed to determine if an anomaly exists in video frames. 
During testing, we can replace the input from text descriptions with video sequences. The video features extracted from the frozen CLIP image encoder can directly feed into the text-trained VAD model to obtain frame anomaly confidence.


The main contributions can be summarized as follows:

\begin{itemize}
\item {We propose a novel method to train VAD based only from text generated by LLM, which allows the model to be trained with fully supervised data without any video data, to address the pressing issue of data scarcity and privacy in anomaly detection.}

\item {We propose text descriptions as videos to train a model to detect anomaly events in videos. Text descriptions are easily accessible and, in contrast to images, their class labels can be directly derived, making the proposed method compelling in practice.}

\item {We propose a simple yet efficient temporal modeling method based on the pre-trained CLIP model to align the text and video modalities, which can effectively reduce the modality gap.}
\end{itemize}

\section{Related Work}
\label{sec:work}
\textbf{Video Anomaly Detection.} Early video anomaly detection (VAD) studies primarily minimize the reconstruction errors on normal-only video sequences~\cite{hasan2016learning,lu2013abnormal, thakare2023rareanom}, and thus regard large reconstruction errors as a signal of anomaly one. A parallel line formulates VAD as weakly-supervised multiple-instance learning, leveraging video-level labels but no reliable frame labels to localize frame-level anomalies~\cite{sultani2018real, wu2021learning, wu2024vadclip}, avoiding the prohibitive cost of frame annotation. Sultani et al.~\cite{sultani2018real} collect a large-scale VAD dataset with video-level annotation and use a multiple instance ranking strategy to localize temporally anomalous events. Additionally, recent works incorporate audio modality~\cite{wu2020not,wu2025avadclip} or instruction-tunes detectors on the collected video-caption corpus~\cite{zhang2024holmes} to broaden coverage. Recently, there has been a surge in efforts to leverage multi-modal knowledge of vision-language models (VLMs) to boost VAD performance via prompt tuning or lightweight adapters~\cite{ye2025vera, wu2024vadclip, zhang2024holmes, chen2025aligning}. However, these approaches still need target-domain videos for fine-tuning, consuming huge amounts of manual labor to collect and annotate this data. In contrast, the proposed approach requires no additional video data yet matches the accuracy of tuned models with fast inference speed and little GPU costs.

\textbf{Large Foundation Models for VAD.} Large language models (LLMs)~\cite{liu2024deepseek, wang2022self, wei2022chain}, and vision-language models (VLMs)~\cite{alayrac2022flamingo, dai2023instructblip,li2023blip} have been explored and already achieve impressive performance across diverse tasks such as visual question answering. Kim et al.~\cite{kim2023unsupervised} apply the text descriptions generated via LLM for the unsupervised anomaly detection. Recently, LAVAD~\cite{zanella2024harnessing} extends this power to anomaly detection: it generates textual descriptions for each video frame with a VLM and scores each one with an LLM, achieving superior anomaly detection performance. Chen et al.\cite{chen2025aligning} propose a novel method for understanding video anomalies with LLMs by aligning effective tokens in terms of temporal and spatial space. Ye et al.~\cite{ye2025vera} refine prompts to adapt frozen VLMs as an integrated system for VAD.
Later work via adding a temporal graph module~\cite{shao2025eventvad}, or fusing audio cues~\cite{dev2024mcanet} to improve anomaly detection performance. However, the deployment of these aforementioned large models markedly escalates the demand for high-end GPUs compared to conventional ConvNet approaches. 

Different from the above methods, the proposed method only leverages the text descriptions generated by LLM to address temporal anomaly detection on videos to break visual dependence, requiring no video collection and annotation.

\begin{figure*}[t]
  \centering
   \includegraphics[width=1\linewidth]{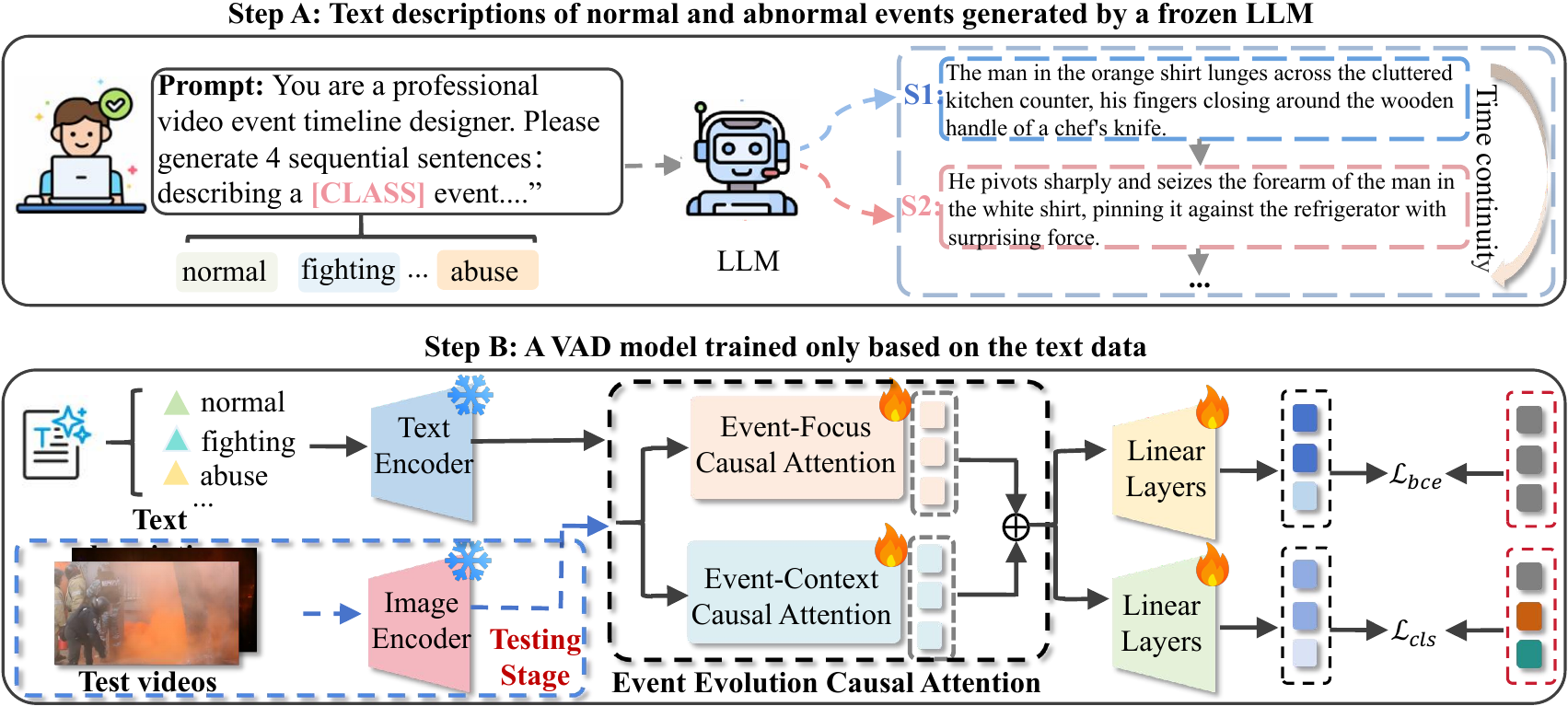}
   \caption{An overview of the proposed TD-VAD approach. The proposed method is based on a pre-trained CLIP containing the frozen text encoder and image encoder. (\textbf{Top}) A frozen LLM~\cite{liu2024deepseek} generates a diverse set of normal and anomalous scene sentences using a prompt. (\textbf{Bottom}) During training, the generated text descriptions are embedded by the frozen CLIP text encoder, and then these embeddings are used to train a VAD model. During testing, we replace the input text descriptions with videos.  }
   \label{fig:framework}
 \vspace{-4mm}
\end{figure*}

\section{Methodology}
\label{sec:method}

\subsection{Problem Formulation}
Let a video be represented as a frame sequence
$V = \{v_t\}_{t=1}^{T_v}$.
The goal of video anomaly detection (VAD) is to learn a scoring function
$\varphi: v_t \mapsto p_t \in [0,1]$,
where $p_t$ denotes the anomaly likelihood of frame $v_t$.
Depending on the level of supervision, prior VAD approaches can be categorized as:
(i) \textit{weakly-supervised} methods, which train on videos with only video-level labels (normal or abnormal);
(ii) \textit{one-class} methods, which train solely on normal videos and detect deviations during inference; and
(iii) \textit{unsupervised} methods, which assume no labels at all.
In contrast, the proposed \textit{vision-free VAD} setting assumes that no
target-domain training videos are available, \ie,
$D_{\mathrm{train}} = \varnothing$.
Instead, the model is provided only with a set of semantic anomaly concepts
$\mathcal{C}=\{c_k\}_{k=1}^K$.
Given an unseen test video $V_{\mathrm{test}}$,
the model infers frame-level anomaly scores
$\{p_t\}_{t=1}^{T_v}$ conditioned solely on textual semantics:
\begin{align}
\varphi_\theta:\ (V_{\mathrm{test}}, \mathcal{C}) \mapsto \{p_t\}_{t=1}^{T_v}.
\end{align}

\subsection{Overview}
The proposed framework is shown in Fig.~\ref{fig:framework}. 
\textbf{In the first step} (\textit{generating text descriptions}), we adopt an LLM, DeepSeek-V3~\cite{liu2024deepseek}, to generate text descriptions with timestamps for anomaly events.  
\textbf{In the second step} (\textit{training a VAD model only using text data}), we use the frozen CLIP text encoder~\cite{radford2021learning} to encode the input text, and present an event evolution causal attention module to capture the evolution chain by modeling contextual dependencies over time. 
This module involves two components, \ie, an event-context causal attention (ECC-Attn) and an event-focus causal attention (EFC-Attn), which capture the global temporal pattern and local action details of anomaly events, respectively. 
More importantly, thanks to the aligned image-text embedding space of CLIP, we can transfer the VAD model trained on text to the visual domain by utilizing the alignment between text and video frames. 
During inference, video frame embeddings are extracted using the frozen CLIP image encoder to replace text input. 
Overall, we propose a simple and efficient method to achieve vision-free VAD using only text data.

\subsection{Generating Text Descriptions via LLM}
A key challenge of vision-free VAD is the absence of annotated video clips for learning the temporal evolution of anomaly events. To enable training without any visual data, we construct a large-scale, temporally structured textual dataset using a Large Language Model (LLM) to reflect real-world visual dynamics. The generated text sequences aim to (i) convey anomaly semantics faithfully, (ii) exhibit clear temporal progression, and (iii) provide balanced coverage across event categories. 

\noindent\textbf{Category Preparation.}
We first collect all normal and abnormal categories from the target dataset, \eg, \textit{fighting}, \textit{shooting}, and \textit{riot} in XD-Violence~\cite{wu2020not}. To prevent distributional bias, we allocate different numbers of generated samples per category, with normal scenes receiving the largest portion and rare anomaly types receiving fewer but representative samples. This strategy ensures both diversity and balance in the textual corpus.

\noindent\textbf{Temporal Prompt Design.}
To mimic the temporal structure of video events, we design a prompt instructing the LLM to produce sequential descriptions:
\begin{quote}

\textit{``You are a professional video event timeline designer. Generate 4 sequential sentences describing a [CLASS] event.''}

\end{quote}
\textit{Each sample thus contains four time-ordered sentences covering the \textit{beginning}, \textit{development}, \textit{climax}, and \textit{resolution} of the event, and the action is introduced with where, when, and how it happens to reflect real-world visual dynamics.} This four-step structure offers concise yet expressive temporal evolution while avoiding unnecessary verbosity. We repeatedly apply the prompt with randomized sampling parameters to encourage diversity in vocabulary, narrative detail, and contextual cues.




\noindent\textbf{Quality Control and Text Augmentation.}
We remove samples with incomplete sentences, redundancy, or temporal inconsistencies. To enrich linguistic diversity, concise event captions from~\cite{liu2025surveillancevqa} (unrelated to the target domain) are expanded into sequential descriptions using the same prompt, introducing additional variation while preserving anomaly-related semantics.

In summary, category-aware sampling, temporally structured prompts, and quality control collectively yield a diverse, video-like textual corpus that effectively substitutes video data for training. Details are provided in Appendix~\ref{app:sec:more:results}.

To ensure temporal coherence between textual descriptions and visual streams, we proportionally repeat each description to a fixed length of 16 time steps during text encoding. The resulting 16-step text embeddings are then zero-padded to 256 dimensions, enabling frame-level alignment with video representations.

\subsection{Text-only Training of VAD Model}

Training a vision-free VAD model requires learning temporal anomaly patterns without access to any video data. This is made possible by the pre-trained CLIP model, whose image and text encoders project visual frames and textual descriptions into a shared semantic space. As a result, text descriptions that depict the temporal evolution of an event can serve as effective surrogates for video sequences, with each sentence functioning analogously to a video frame.

\begin{assumption}[Bounded Cross-Modal Metric Distortion]
\label{assump:alignment}
We assume that the pre-trained CLIP model induces a shared vision--language embedding space
with bounded cross-modal distortion at the instance level.

Let $\phi_{\mathrm{t}}(\cdot)$ and $\phi_{\mathrm{v}}(\cdot)$ denote the frozen text and image encoders.
For any semantic concept $c\in\mathcal{C}$, let $S\sim P_{\mathrm{t}}(\cdot\mid c)$ and
$V\sim P_{\mathrm{v}}(\cdot\mid c)$ denote random text and video samples associated with concept $c$.
There exists a constant $\varepsilon>0$ such that
\begin{equation}
\label{eq:instance_alignment}
\Pr\!\left(
\left\|
\phi_{\mathrm{t}}(S)-\phi_{\mathrm{v}}(V)
\right\|_2
\le \varepsilon
\;\middle|\; c
\right)
\ge 1-\rho,
\end{equation}
for some small $\rho\in(0,1)$. More details in Assumption ~\ref{app:assump:alignment}
\end{assumption}

\noindent\textbf{Text Embedding and Training Input.}
Under Assumption~\ref{assump:alignment}, the text-only training set is defined as 
$\mathcal{D}=\{(S_i, Y_i)\}_{i=1}^M$, where
$S_i = \{s_\tau\}_{\tau=1}^{T_s}$ denotes the text descriptions,
$s_\tau$ denotes the sentence at position $\tau$, and
$Y_i \in \{0,1\}^K$ is its multi-hot label.  
Each description is encoded by the frozen CLIP text encoder
\begin{align}
\phi_{\mathrm{t}} : S \rightarrow \mathbb{R}^{T_s \times d},
\quad
F_i^\star = \phi_{\mathrm{t}}(S_i), \quad \forall i \in [M],
\end{align}
yielding a sequence of $T_s$ text-derived embeddings.
After applying the temporal alignment operator $\mathcal{A}$, the
training input to the temporal predictor is given by
$F_i = \mathcal{A}(F_i^\star) \in \mathbb{R}^{T\times d}$.
Although purely textual, these descriptions exhibit a clear and well-structured temporal progression, thereby motivating the need for subsequent temporal modeling to capture their sequential dependencies.

\noindent\textbf{Event Evolution Causal Attention.}
To capture the temporal structure inherent in the generated text sequences, we introduce the Event-evolution Causal Attention (EC-Attn) module. It consists of two complementary components:

\noindent\textit{(1) Event-Focus Causal Attention (EFC-Attn).}
LLM-generated descriptions typically contain fine-grained action cues that correspond to short, localized temporal patterns within an event.  
To effectively capture such local dynamics, EFC-Attn divides the sequence into fixed, non-overlapping windows and applies masked self-attention within each window using a causal constraint (\ie, a lower-triangular attention mask).  
This windowed causal attention encourages the model to focus on the immediate temporal neighborhood, producing localized representations that preserve short-range temporal order and highlight subtle action transitions, which can be formulated as follows:
\begin{align}
\mathbf{H}^{\mathrm{ins}} = \operatorname{EFC-Attn}(\phi_{t}(S_{i}))~, \quad S_{i} = \{\bm{s}_\tau\}_{\tau=1}^{T_{s}}~.
\end{align}

\noindent\textit{(2) Event-Context Causal Attention (ECC-Attn).}
While EFC-Attn focuses on localized temporal cues, many anomalous events evolve gradually and require reasoning over the full sequence.  
To capture such long-range dependencies without breaking temporal causality, ECC-Attn performs global attention modulated by a learnable causal bias.
Given queries $\mathbf{Q}\in\mathbb{R}^{T\times d_k}$, keys $\mathbf{K}\in\mathbb{R}^{T\times d_k}$, 
and values $\mathbf{V}\in\mathbb{R}^{T\times d_v}$, the standard attention score matrix 
$\mathbf{S}\in\mathbb{R}^{T\times T}$ is computed as:
$\mathbf{S} = \mathbf{Q}\mathbf{K}^\top / \sqrt{d_k}$.
To impose soft causal ordering, a distance-aware bias 
$\mathbf{B}\in\mathbb{R}^{T\times T}$ is introduced.
This bias encourages attention toward nearby history while still allowing the model to access more distant past information, and strictly prevents attending to the future.
The global causal attention weights are computed as:
\begin{equation}
\mathbf{A} = \operatorname{softmax}(\mathbf{S} + \mathbf{B}),
\qquad \mathbf{A}\in\mathbb{R}^{T\times T},
\end{equation}
and the resulting context-enhanced representation is obtained as follows:
\begin{align}
\mathbf{H}^{\mathrm{evo}} = \operatorname{ECC-Attn}(\phi_{t}({S}_{i}))~, \quad S_{i} = \{\bm{s}_\tau\}_{\tau=1}^{T_{s}}~.
\end{align}

\noindent\textbf{Event Representation Integration and Hierarchical Classification.}
The Event-Focus and Event-Context causal attention modules yield two complementary 
representations: an instant-level representation 
$\mathbf{H}^{\mathrm{ins}}$ that captures short-term motion primitives, and an 
evolution-level representation $\mathbf{H}^{\mathrm{evo}}$ that models the long-range 
temporal evolution of an event.  
To consolidate these two temporal perspectives in a lightweight yet expressive manner, 
we derive a unified event representation through element-wise integration:
\begin{equation}
\mathbf{H}^{\mathrm{evt}} = \mathbf{H}^{\mathrm{inst}} + \mathbf{H}^{\mathrm{evo}}~.
\end{equation}
This unified representation retains both fine-grained temporal details and holistic 
event-evolution structure, yielding a more discriminative description of the underlying event.

The event representation $\mathbf{H}^{\mathrm{evt}}$ is then passed to a hierarchical anomaly-aware classification branch, which includes a binary head for anomaly detection 
and a multi-class head for distinguishing among specific anomaly types for yielding prediction results $\{p_\tau\}_{\tau=1}^{T_{s}}$ and $\{z_\tau\}_{\tau=1}^{T_{s}}$, respectively.  
By jointly modeling instant-level cues and long-range event evolution, our text-only framework effectively learns anomaly patterns without requiring any visual supervision.

\subsection{Optimization Objective}
The model is trained under a composite objective that jointly optimizes coarse-grained anomaly detection and fine-grained event recognition for every part of generated text descriptions, guided by a multiple-instance learning paradigm to navigate temporal ambiguity. 

\noindent\textbf{Binary Anomaly Detection.}
For the $i$-th text description, the temporal predictor outputs anomaly
scores $\{p_\tau^{\,i}\}_{\tau=1}^{T_{s}}$ after temporal alignment.
Following a multiple-instance learning strategy, we select the top-$k_i$
most salient positions:
\begin{equation}
\mathcal{I}_i
=
\operatorname*{arg\,top}_{|\mathcal{I}|=k_i}
\{p_\tau^{\,i}\}_{\tau=1}^{T_{s}},
\qquad
\bar{p}_i
=
\frac{1}{k_i}
\sum\nolimits_{\tau\in\mathcal{I}_i} p_\tau^{\,i}.
\end{equation}
The aggregated score $\bar{p}_i$ is supervised by a binary label
$y_i\in\{0,1\}$ derived from $Y_i$ using the binary cross-entropy:
\begin{equation}
\mathcal{L}_{\mathrm{bce}}
\!=\!
-\frac{1}{M}
\sum\nolimits_{i=1}^{M}
\big[
y_i \log \bar{p}_i
\!+\!
(1-y_i)\log(1-\bar{p}_i)
\big].
\end{equation}
This formulation emphasizes the most anomalous temporal segments while
being robust to irrelevant text snippets.

\noindent\textbf{Multi-Class Event Classification.}
The predictor outputs logits $z_{\tau,k}^{\,i}$ for each temporal position
$\tau$ and anomaly concept $c_k\in\mathcal{C}$.
Following the same top-$k_i$ aggregation strategy, the aggregated logits
are computed as
\begin{equation}
\bar{z}_k^{\,i}
=
\frac{1}{k_i}
\sum\nolimits_{\tau \in \mathcal{I}_i}
z_{\tau,k}^{\,i},
\qquad
\hat{y}_k^{\,i}
=
\frac{\exp(\bar{z}_k^{\,i})}
{\sum_{k'=1}^{K}\exp(\bar{z}_{k'}^{\,i})}.
\end{equation}
The multi-class classification loss is defined as
\begin{equation}
\mathcal{L}_{\mathrm{cls}}
=
-\frac{1}{M}
\sum\nolimits_{i=1}^{M}
\sum\nolimits_{k=1}^{K}
y_k^{\,i}
\log \hat{y}_k^{\,i},
\end{equation}
where $y^{\,i}\in\{0,1\}^{K}$ indicates the ground-truth anomaly concept.

\subsection{Inference for Videos}
After training a VAD model using video-like textual descriptions generated by an LLM, our target is to transfer the VAD model trained only with text-only data to the vision modality. Therefore, the encoded text embeddings should match the corresponding video frames. Thanks to the pre-trained contrastive model CLIP, which aligns the text embedding and image embedding, we can directly feed the embeddings from video frames ${v}_i$ into the VAD model trained by the text embedding. Therefore, to reduce the influence of the modality gap, we use the frozen CLIP image encoder $\phi_{\mathrm{v}}(\cdot)$ to extract video frame features.
Then, the obtained image embeddings are fed into the trained VAD model to compute the anomaly scores of all test video sequences. The frame-level anomaly degree can be obtained via the multi-class classifier. Specifically, subtracting the softmax probability of normal by one is the anomaly degree. The proposed approach retains the visual information of the image embedding for anomaly detection. When adapting to different datasets, we can simply replace the classes in the text prompt with the classes of the target dataset.

\section{Experiment}
\subsection{Experimental Setup}
\textbf{Datasets.} We evaluate the proposed approach on two large-scale benchmarks for the VAD task, \ie, \textbf{UCF-Crime}~\cite{sultani2018real} (UCF) and \textbf{XD-Violence}~\cite{wu2020not} (XD). As our method does not use any video data for training, we adopt their official test set to evaluate our method. Specifically, the UCF dataset contains 290 test videos (140 abnormal and 150 normal) with 13 anomaly types. As for XD, this dataset includes 800 test videos (500 abnormal and 300 normal) from six categories.

\begin{table}[t]
\renewcommand{\arraystretch}{0.8}
\centering
\caption{Comparison with weakly-supervised, one-class, unsupervised, and vision-free video anomaly detection methods on XD-Violence. The best results are \textbf{bolded}.}

\resizebox{0.95\linewidth}{!}{
\begin{tabular}{cccc} 
\toprule
Method & Backbone & AP(\%) & AUC(\%) \\ 

CLIP~\cite{radford2021learning} & ViT & 17.83 & 38.21  \\ 

\midrule
\multicolumn{4}{c}{\textbf{Weakly-supervised VAD}} \\ \midrule 
Wu et al.~\cite{wu2020not} & C3D-RGB & 67.19 & -- \\
Wu et al.~\cite{wu2020not} & I3D-RGB & 73.20 & --  \\
MSL~\cite{li2022self} & C3D-RGB & 75.53 & --  \\
\midrule
\multicolumn{4}{c}{\textbf{One-class VAD}} \\ \midrule 
HASAN et al.~\cite{hasan2016learning} & $\text{AE}^{\text{RGB}}$ & -- & 50.32   \\
LU et al.~\cite{lu2013abnormal} &Dictionary  & -- & 53.56   \\
BODS~\cite{wang2019gods} & I3D-RGB & -- & 57.32 \\
GODS~\cite{wang2019gods} & I3D-RGB & -- & 61.56 \\
\midrule
\multicolumn{4}{c}{\textbf{Unsupervised VAD}} \\ \hline 
RareAnom~\cite{thakare2023rareanom} & I3D-RGB & -- & 68.33   \\
\midrule
\rowcolor{gray!20}\textbf{Ours} & ViT & \textbf{75.83} &  \textbf{89.50} \\ 
\bottomrule
\end{tabular}
}
\label{tab:xd}
\vspace{-4mm}
\end{table}

\noindent \textbf{Metrics.} Following prior work~\cite{wang2025learning}, we report the area under the frame-level ROC curve (AUC) for both evaluation datasets and the frame-level average precision (AP) for the XD dataset.

\noindent \textbf{Baselines.} We compare with representative methods at each supervision level: weakly-supervised~\cite{sultani2018real, zhang2019temporal, zaheer2022generative, wu2020not}, one-class~\cite{hasan2016learning, lu2013abnormal, wang2019gods}, unsupervised~\cite{thakare2023rareanom}.

\begin{table}[t]
\renewcommand{\arraystretch}{0.9}
\caption{Comparison with weakly-supervised, one-class, unsupervised, and vision-free video anomaly detection methods on UCF-Crime. The best results are \textbf{bolded}.}

\centering
\resizebox{0.90\linewidth}{!}{
  \begin{tabular}{ccc}  
    \toprule
    Method & Backbone & AUC(\%) \\  
    CLIP~\cite{radford2021learning} & ViT & 53.16 \\
    \midrule
    \multicolumn{3}{c}{\textbf{Weakly-supervised VAD}} \\ \midrule  
    SULTANI et al.~\cite{sultani2018real} & C3D-RGB & 75.41 \\
    SULTANI et al.~\cite{sultani2018real} & I3D-RGB & 77.92 \\
    IBL~\cite{zhang2019temporal} & C3D-RGB & 78.66 \\
    GCL~\cite{zaheer2022generative} & ResNext  & 79.84 \\
    \multicolumn{3}{c}{\textbf{One-class VAD}} \\ \hline  
    SVM~\cite{sultani2018real} & - & 50.00 \\
    SSV~\cite{sohrab2018subspace} & - & 58.50 \\
    BODS~\cite{wang2019gods} & I3D-RGB & 68.26 \\
    GODS~\cite{wang2019gods} & I3D-RGB & 70.46 \\ \midrule
    \multicolumn{3}{c}{\textbf{Unsupervised VAD}} \\ \midrule  
    GCL~\cite{zaheer2022generative} & ResNext & 74.20 \\
    TUR et al.~\cite{tur2023exploring} & ResNet  & 65.22 \\
    TUR et al.~\cite{tur2023unsupervised} & ResNet & 66.85 \\
    DYANNET~\cite{thakare2023dyannet} & I3D & 79.76 \\ \hline
    \rowcolor{gray!20} \textbf{Ours} & ViT & \textbf{80.82} \\ 
    \bottomrule
  \end{tabular}
  }
  \label{tab:ucf}
  \vspace{-4mm}
\end{table}

\begin{table*}
\centering
\caption{Real-time performance comparison with the LAVAD method. The best results are \textbf{bolded}.} 

\resizebox{0.8\linewidth}{!}{
\begin{tabular}{ccccccccccc}
\toprule
\multirow{2}{*}{Method} & \multirow{2}{*}{} & \multicolumn{3}{c}{XD-Violence} &    & \multicolumn{1}{c}{UCF-Crime} &    & Frame rate   &  & Params Size    \\ \cline{3-5} \cline{7-7}
                        &                   & AP(\%)     &      & AUC(\%)     &    & AUC(\%)     &    & (frames/sec) &  &  \\ \hline

LAVAD~\cite{zanella2024harnessing}               &                   & 62.01      &      & 85.36          &    & 80.28          &    & 1.26        &  & 13B             \\
\rowcolor{gray!20}Ours                   &                   & \textbf{75.83}          &       & \textbf{89.50}    &  & \textbf{80.82}    &  & \textbf{183}         &  & \textbf{0.31B}        \\ \bottomrule
\end{tabular}
}
\vspace{-4mm}
\label{tab:real_time}

\end{table*}

\noindent \textbf{Implementation.} 
For the model structure, frozen image and text encoders are adopted from pre-trained CLIP (ViT-B/16). We generate 17674 and 3253 text descriptions for XD and UCF datasets, respectively. For the local window length in the local-action causal attention module, we set it as 4 and 16 for XD and UCF datasets, respectively. We set top-$k$ as 16 in loss function computation.
For training, the proposed TD-VAD is trained on a single RTX 4090 using the PyTorch framework. We use AdamW as the optimizer with a batch size of 64. On XD-Violence, the learning rate and total epoch are set as $3\times10^{-5}$ and 10, respectively, and on UCF-Crime, the learning rate and total epoch are set as $2\times10^{-4}$ and 10, respectively.

\subsection{Comparison Results}
We undertake a comprehensive evaluation of the proposed TD-VAD's anomaly detection capabilities by benchmarking it against widely-used approaches across XD-Violence and UCF-Crime datasets in Table~\ref{tab:xd} and Table~\ref{tab:ucf}, respectively. 

For the XD-Violence dataset, TD-VAD's AP on this dataset are 8.64\% higher than the advanced weakly-supervised method~\cite{wu2020not}, even though no visual data is used. Also, our method outperforms the baseline CLIP~\cite{radford2021learning} (38.21 vs. 75.83). The proposed method (AUC 89.50) also surpasses strong unsupervised (vs. 68.33), one-class (vs. 61.56), and several weakly-supervised baselines. Table~\ref{tab:ucf} also shows the competitive anomaly detection performance of the proposed approach on the UCF-Crime dataset. We can observe that the performance of our method on this dataset is 5.41\% higher than the weakly-supervised method SULTANI et al.~\cite{sultani2018real}. At the same time, our method achieves an impressive 80.82 AUC, making a significant 2.16 absolute improvement compared to the advanced method~\cite{zhang2019temporal}. Also, the proposed TD-VAD outperforms all unsupervised and one-class methods, as well as some weakly-supervised methods, on UCF-Crime.  
Different from the above methods, our approach adopts the pure text data generated by LLM for training. These presented results demonstrate our approach's superiority and robustness even if no visual data is available. The season for such performance we analyzed is that the generated text precisely captures the essential semantics and temporal logic of abnormal events. Leveraging CLIP’s cross-modal alignment and EC-Attn’s structured temporal modeling, TD-VAD learns universal anomaly discrimination rules from text, which effectively generalize to real videos.

\begin{table}[t]
\centering
\caption{Cross-dataset results on UCF-Crime and XD-Violence.}
\resizebox{0.80\linewidth}{!}{
\begin{tabular}{c|c|cc}
\toprule
Test$\Rightarrow$ & \multicolumn{1}{c|}{UCF-Crime} & \multicolumn{2}{c}{XD-Violence} \\
Train$\downarrow$ & AUC(\%)  & AP(\%) & AUC(\%) \\
\midrule
UCF-Crime & \cellcolor{gray!20}\textbf{80.82}  & 81.39 & 59.53\\
XD-Violence & 77.08 & \cellcolor{gray!20}\textbf{75.83} & \cellcolor{gray!20}\textbf{89.50} \\
\bottomrule
\end{tabular}%
}
\label{tab:cross_dataset}
\vspace{-4mm}
\end{table}

As shown in Table~\ref{tab:real_time}, we compare the proposed TD-VAD approach with LAVAD~\cite{zanella2024harnessing}, which also leverages the off-the-shelf LLMs and VLMs and follows a language-driven pathway for estimating the anomaly scores, in terms of accuracy, frame rate, and parameter size. Even though the proposed TD-VAD has significantly fewer model parameters (including the frozen CLIP encoder) than LAVAD, its performance shows a substantial improvement compared to LAVAD. This can significantly reduce computational and deployment costs for the model. Moreover, on the XD dataset, TD-VAD improves AP from LAVAD's 62.01 to 75.83, while boosting frame rate from 1.26 FPS to 183 FPS, a speed-up of roughly 145 times. TD-VAD achieves a higher frame rate, lower parameter size, and higher accuracy compared to LAVAD, thereby demonstrating the effectiveness of the proposed approach.

\textbf{Cross-Dataset Validation}. In Table~\ref{tab:cross_dataset}, we conduct the cross-dataset experiment to evaluate the generalization of the proposed TD-VAD model. For the scenario where the trained TD-VAD model is tested on an unseen dataset, TD-VAD still achieves competitive performance. This indicates that the semantic patterns learned from generated text descriptions can generalize to distinct scene distributions, verifying the cross-dataset transferability of our text-driven paradigm. Additionally, such performance also validates that the proposed TD-VAD approach can detect truly unseen or unknown anomalies.

\begin{figure*}[!t]
  \centering
   \includegraphics[width=1\linewidth]{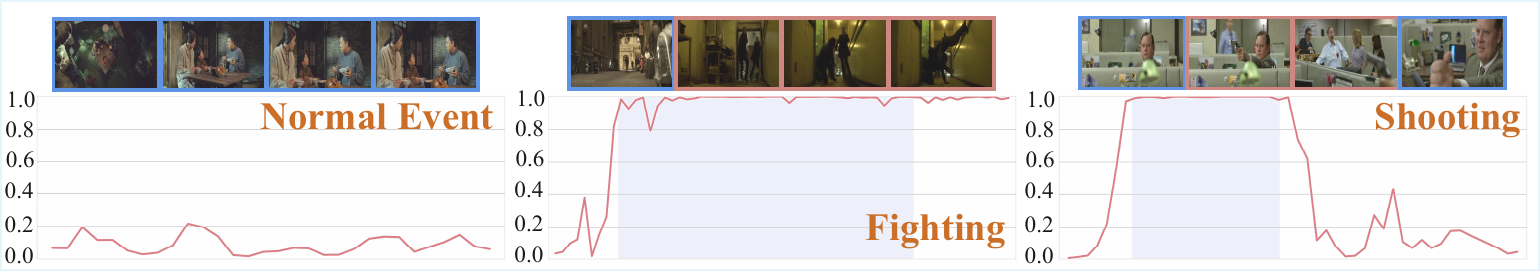}  
   \caption{Qualitative visualization results of the proposed TD-VAD approach. As illustrated in this figure, the red curves represent the anomaly scores, and the blue regions denote the ground-truth abnormal temporal location. As we can see, the proposed approach detects abnormal regions of different categories precisely. Meanwhile, it also produces low anomaly scores on purely normal video sequences.  }
   \label{fig:example}
\end{figure*}

\subsection{Ablation Study}
We conduct extensive ablation studies to validate the effectiveness of the key components in our method on the XD-Violence dataset.

\noindent \textbf{Timestamps of Text Descriptions.} To investigate the influence of timestamps of generated text descriptions, we conduct experiments to generate text descriptions with different timestamps. The experimental results are provided in Table~\ref{subtab:timestamp}. We find that increasing the timestamps of generated text descriptions can effectively improve the performance in training. It is worth noting that the 4 timestamps achieve better results than the 8 timestamps. The reason we analyzed is that the text descriptions of anomaly events with 4 timestamps align more closely with the actual anomaly boundaries, yielding smaller semantic gaps.

\noindent \textbf{Window Length in ECC-Attn.} We explore the influence of different window lengths in the proposed event-focus causal attention. 
As shown in the table~\ref{subtab:length}, the window length set as 4 achieves the best performance. Additionally, from the table, we find that even with different window lengths, the TD-VAD still performs well, which demonstrates the insensitivity to this hyper-parameter.

\begin{table}[ht]
  \centering
  \renewcommand{\arraystretch}{0.9} 
  \caption{Ablation studies of timestamps of generated text descriptions, and the window length in ECC-Attn.}
 
  \resizebox{0.9\linewidth}{!}{%
    \subcaptionbox{Timestamps\label{subtab:timestamp}}{%
      \begin{tabular}[t]{c|cc}
        \toprule
        Timestamps & AP    & AUC   \\ \midrule
        1          & 71.79 & 87.57 \\
        2          & 72.91 & 87.94 \\
        \rowcolor{gray!20}\textbf{4} & \textbf{75.83} & \textbf{89.50} \\
        8          & 74.67 & 88.89 \\ \bottomrule
      \end{tabular}%
    }
    \hspace{2em} 
    \subcaptionbox{Window Length\label{subtab:length}}{%
      \begin{tabular}[t]{c|cc}
        \toprule
        Length & AP    & AUC   \\ \midrule
        2      & 75.24 & 89.08 \\
        \rowcolor{gray!20} \textbf{4}      & \textbf{75.83} & \textbf{89.50} \\
        8      & 74.98 & 89.43 \\
        16     & 75.12 & 89.57 \\ \bottomrule
      \end{tabular}%
    }%
  }
  \label{tab:ablation_timestamps_length}
  \vspace{-4mm}
\end{table}

\begin{table}[]
\centering
\caption{Ablation studies of the proposed event evolution causal attention module. The best results are \textbf{bolded}.}

\resizebox{0.80\linewidth}{!}{ 
\begin{tabular}{cccc|ccc}
\toprule
ECC-Attn &  & EFC-Attn &  & AP &  & AUC \\ \midrule
  -  &  & -    &  & 30.48   &  & 60.33    \\
  $\checkmark$  &  & -    &  &  71.29  &  &  88.18  \\
  -  &  &  $\checkmark$   &  & 73.09   &  &  87.84   \\
 \rowcolor{gray!20}$\checkmark$   &  &  $\checkmark$   &  & \textbf{75.83}   &  &  \textbf{89.50}   \\ \bottomrule
\end{tabular}
}\label{tab:causal}
\vspace{-4mm}
\end{table}

\begin{figure}[t]
  \centering
   \includegraphics[width=0.9\linewidth]
   {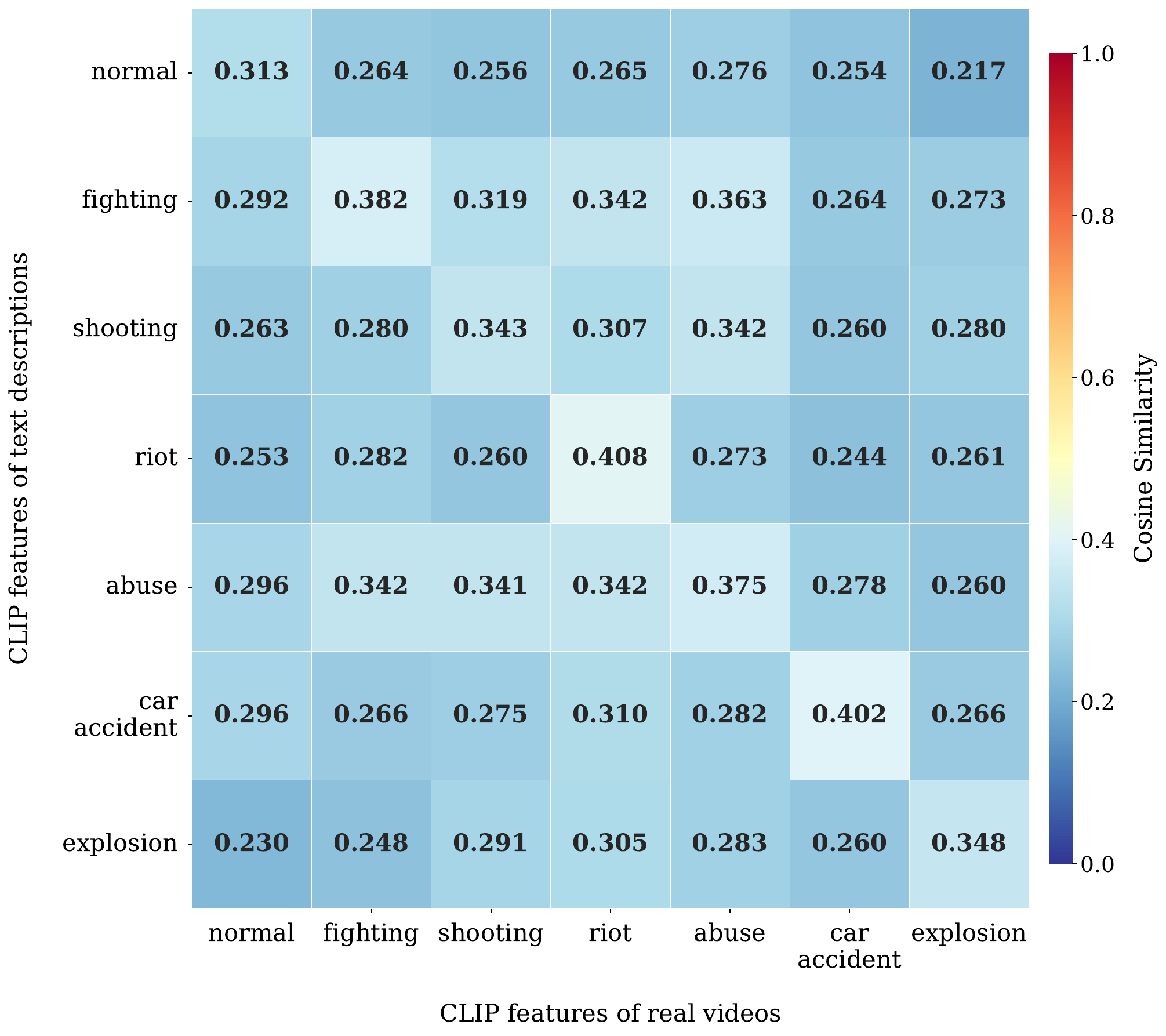}
  
   \caption{Similarity Matrix of CLIP Cross-Modal Features: Text Descriptions vs. Real Videos from XD-Violence dataset.  }
   \vspace{-2mm}
   \label{fig:tsne}
   
\end{figure}

\begin{table}[]

\centering
\renewcommand{\arraystretch}{0.8}
\caption{Ablation studies of the proposed hierarchical anomaly-aware classifier module. The best results are \textbf{bolded}.}

\resizebox{0.80\linewidth}{!}{ 
\begin{tabular}{cccc|ccc}
\toprule
Binary &  & Multi-class &  & AP &  & AUC \\ \midrule
  $\checkmark$  &  & -    &  &  68.22  &  &  86.77  \\
  -  &  &  $\checkmark$   &  & 74.83   &  &  89.35   \\
 \rowcolor{gray!20}$\checkmark$   &  &  $\checkmark$   &  & \textbf{75.83}   &  &  \textbf{89.50}   \\ \bottomrule
\end{tabular}
}\label{tab:classifier}

\end{table}

\noindent \textbf{Event-Evolution Causal Attention Module}. The ablation results in table~\ref{tab:causal} demonstrate that combining EFC-attn with the ECC-attn mechanism yields strong performance gains. The EFC-attn captures local dynamics of events, while the ECC-attn mechanism models long-range dependencies. Their integration enables the VAD model to learn both short dynamics and global event patterns of anomaly events, which is crucial to the overall performance improvement. This highlights the complementary roles of the two components and the effectiveness of their joint design.

\noindent \textbf{Hierarchical Anomaly-aware Branch}.  In Table~\ref{tab:classifier}, we conduct an in-depth analysis to examine the impact of the binary and multi-class classifiers in the proposed hierarchical anomaly-aware branch on anomaly detection performance on XD-Violence. With the assistance of feature optimization in the binary branch, the multi-class classifier obtains a performance improvement compared to the previous one.  Experimental results clearly show that a dual approach can boost the performance by leveraging the complementarity of different granularities.

\subsection{Qualitative Analyses}
\label{sec:Analyses}

\noindent \textbf{Qualitative Visualization}. Fig.~\ref{fig:example} shows the qualitative results of the proposed TD-VAD with sample videos from XD-Violence. Upon examination, it becomes evident that the proposed TD-VAD is capable of detecting anomaly events accurately. Additionally, our method also produces considerably low anomaly scores in normal areas. The visualization results demonstrate that the proposed TD-VAD approach can effectively detect anomaly events in video sequences.

\textbf{Visualization of Similarity between Text and Visual Features}. As shown in Fig.~\ref{fig:tsne}, for each event category, the similarity between intra-class text and video features is significantly higher than inter-class pairs. This confirms that text descriptions accurately capture the core semantic features of corresponding real-world anomalies. However, \textbf{one limitation of the generated text corpus} is that due to the shared violent semantics and overlapping visual manifestations, the relatively high cross-category similarity between text features from the fighting event and video features from the abuse event may confuse fine-grained classification. This is a common limitation for fine-grained VAD task.

\section{Conclusion}
To break the dependence on annotated video data, a novel TD-VAD approach is proposed only using text data. We propose a new view of treating text descriptions as videos for training a VAD model in a vision-free manner. The LLM is exploited to efficiently generate a high-quality textual dataset for training. To address the modality gap between the video and text, we design a simple yet effective video-like text generation method and utilize the aligned space of the CLIP model. Extensive experimental results demonstrate the effectiveness of the proposed TD-VAD. In the future, we will continue to explore the vision-language pre-trained knowledge and further devote to the vision-free VAD task.

\section*{Acknowledgements}

This work was supported by the Fundamental and Interdisciplinary Disciplines Breakthrough Plan of the Ministry of Education of China (JYB2025XDXM902), the National Natural Science Foundation of China (No. 62476124, 62276134), the Natural Science Foundation of Jiangsu Province (No. BK20242015), the Gusu Innovation and Entrepreneur Leading Talents (No. ZXL2025322), and Nanjing University - China Mobile Communications Group Co., Ltd. Joint Institute.

\section*{Impact Statement}

This paper presents work whose goal is to advance the field of 
Machine Learning. There are many potential societal consequences 
of our work, none of which we feel must be specifically highlighted here.

\nocite{langley00}

\bibliography{example_paper}
\bibliographystyle{icml2026}

\newpage
\appendix
\onecolumn









\section{Preliminaries}\label{app:sec:preli}

In Section~\ref{app:sec:preli:basic}, we introduce basic notation used throughout the appendix.
In Section~\ref{app:sec:preli:problem}, we formally define the vision-free VAD problem and the
surrogate-training setup.




\subsection{Notations}
\label{app:sec:preli:basic}

For a positive integer $n$, let $[n]:=\{1,2,\ldots,n\}$.
We use $k\in[K]$ to index anomaly concepts in
$\mathcal{C}=\{c_k\}_{k=1}^K$.
The Euclidean norm is denoted by $\|\cdot\|_2$.
A video is represented as a frame sequence
$V=\{v_t\}_{t=1}^{T_v}$, and a surrogate text sequence as
$S=\{s_\tau\}_{\tau=1}^{T_s}$, where $T_v$ and $T_s$ need not be equal.
Let $\phi_{\mathrm{v}}(\cdot)$ and $\phi_{\mathrm{t}}(\cdot)$ denote the
frozen CLIP image and text encoders, mapping inputs to a
$d$-dimensional embedding space.
The resulting embedding sequences are
$X^\star=\phi_{\mathrm{v}}(V)\in\mathbb{R}^{T_v\times d}$ and
$F^\star=\phi_{\mathrm{t}}(S)\in\mathbb{R}^{T_s\times d}$.
Both sequences are mapped to a common length $T$ using a deterministic
temporal alignment operator $\mathcal{A}$, yielding
$X=\mathcal{A}(X^\star)\in\mathbb{R}^{T\times d}$ and
$F=\mathcal{A}(F^\star)\in\mathbb{R}^{T\times d}$.
We denote by $g_\theta:\mathbb{R}^{T\times d}\to[0,1]^T$ a temporal
predictor.
At the population level, $(F,Y)$ and $(X,Y)$ are random variables drawn
from the surrogate and target distributions $P_S$ and $P_T$,
respectively.

\subsection{Problem Definition and Assumption }\label{app:sec:preli:problem}

We formalize the vision-free VAD setting studied in this work: the learner has no access to
target-domain training videos and must rely on semantic anomaly concepts only.
To enable learning without visual supervision, we train a temporal predictor on surrogate text
sequences in a frozen CLIP embedding space and deploy it on videos via modality substitution.
We next provide the formal definitions used in our theoretical analysis.

\begin{definition}[Vision-Free Video Anomaly Detection]
\label{def:vision_free_vad}
In the \emph{vision-free} setting, no target-domain training videos are available, i.e.,
$D_{\mathrm{train}}=\varnothing$.
Instead, the learner is given only a set of semantic anomaly concepts
$\mathcal{C}=\{c_k\}_{k=1}^K$.
Given an unseen test video $V_{\mathrm{test}}=\{v_t\}_{t=1}^{T_v}$,
the goal is to produce frame-level anomaly scores
\[
\varphi_\theta:\ (V_{\mathrm{test}},\mathcal{C})
\;\mapsto\;
\{p_t\}_{t=1}^{T_v} \in [0,1]^{T_v}.
\]
Here $\varphi_\theta$ denotes the end-to-end anomaly scoring function induced by the
temporal predictor $g_\theta$ together with the frozen CLIP encoders and temporal
alignment operator.
\end{definition}

\begin{definition}[Surrogate Training and Modality Substitution]
\label{def:surrogate_substitution}
Let $S=\{s_\tau\}_{\tau=1}^{T_s}$ denote a surrogate text sequence and
$V_{\mathrm{test}}=\{v_t\}_{t=1}^{T_v}$ a test video, where $T_s$ and $T_v$ need not be equal.
We assume that both sequences are mapped to a common length $T$ via a deterministic
temporal alignment operator (e.g., repetition or interpolation).

Let $\phi_{\mathrm{t}}(\cdot)$ and $\phi_{\mathrm{v}}(\cdot)$ be the frozen CLIP text and image encoders.
Training is performed using aligned surrogate embeddings
$F=\phi_{\mathrm{t}}(S)\in\mathbb{R}^{T\times d}$, and inference is performed by applying the same
predictor to aligned video embeddings
$X=\phi_{\mathrm{v}}(V_{\mathrm{test}})\in\mathbb{R}^{T\times d}$.

We train a temporal predictor $g_\theta:\mathbb{R}^{T\times d}\to[0,1]^T$ on surrogate embeddings,
and deploy it on video embeddings via modality substitution, yielding
$p=g_\theta(X)\in[0,1]^T$.
\end{definition}

\begin{assumption}[Bounded Cross-Modal Metric Distortion]
\label{app:assump:alignment}
We assume that the pre-trained CLIP model induces a shared vision--language embedding space
with bounded cross-modal distortion at the instance level.

Let $\phi_{\mathrm{t}}(\cdot)$ and $\phi_{\mathrm{v}}(\cdot)$ denote the frozen text and image encoders.
For any semantic concept $c\in\mathcal{C}$, let $S\sim P_{\mathrm{t}}(\cdot\mid c)$ and
$V\sim P_{\mathrm{v}}(\cdot\mid c)$ denote random text and video samples associated with concept $c$.
There exists a constant $\varepsilon>0$ such that
\begin{equation}
\label{eq:instance_alignment}
\Pr\!\left(
\left\|
\phi_{\mathrm{t}}(S)-\phi_{\mathrm{v}}(V)
\right\|_2
\le \varepsilon
\;\middle|\; c
\right)
\ge 1-\rho,
\end{equation}
for some small $\rho\in(0,1)$.

Moreover, for any pair of concepts $c,c'\in\mathcal{C}$ and independent samples
$S\sim P_{\mathrm{t}}(\cdot\mid c)$, $S'\sim P_{\mathrm{t}}(\cdot\mid c')$,
$V\sim P_{\mathrm{v}}(\cdot\mid c)$, $V'\sim P_{\mathrm{v}}(\cdot\mid c')$, we have
\begin{equation}
\label{eq:pairwise_distortion}
\big|
\|\phi_{\mathrm{t}}(S)-\phi_{\mathrm{t}}(S')\|_2
-
\|\phi_{\mathrm{v}}(V)-\phi_{\mathrm{v}}(V')\|_2
\big|
\le 2\varepsilon
\end{equation}
with probability at least $1-2\rho$.
\end{assumption}

\begin{remark}[Relation to PAC consistency~\cite{valiant1984theory}]
The bounded distortion condition in Assumption~\ref{assump:alignment} resembles a
PAC-style consistency assumption at the representation level, where $\varepsilon$
controls the accuracy of cross-modal alignment and $\rho$ the failure probability.
Unlike classical PAC learning, we do not make claims about sample complexity or
empirical risk minimization; instead, the assumption serves to justify generalization
under modality substitution.
\end{remark}

\section{Theoretical Analysis}\label{app:sec:theoreticalanalysis}

This section explains why \emph{text-only} training can generalize to videos in the vision-free setting.
Our analysis follows three steps: (i) we view text embeddings as \emph{surrogate data} and relate the
target (video) risk to the surrogate (text) risk via a standard domain-discrepancy decomposition;
(ii) we characterize the effect of \emph{modality substitution} (replacing text embeddings by video-frame
embeddings at test time) through a stability bound, and (iii) we discuss additional gaps introduced by
surrogate sequence construction (LLM generation and temporal alignment).
The key enabling condition throughout our analysis is the
\emph{bounded cross-modal metric distortion} induced by the pre-trained CLIP embedding space,
formalized in Assumption~\ref{assump:alignment}.%
\footnote{Assumption~\ref{assump:alignment} mirrors Assumption 3.1 in the main body, which motivates
text-only training and deployment on videos through CLIP alignment.}
This assumption states that text and video instances sharing the same semantic concept are mapped
to nearby points in the embedding space, while relative distances between different concepts are
approximately preserved across modalities.

\subsection{Surrogate-to-Target Generalization via Domain Discrepancy}
\label{app:subsec:surrogate_generalization}

Recall that $F=\phi_{\mathrm{t}}(S)$ and $X=\phi_{\mathrm{v}}(V)$ denote the surrogate (text-derived) and
target (video-derived) sequence embeddings in the shared CLIP space (Section~\ref{app:sec:preli:basic}).
Let $P_S$ and $P_T$ be the distributions over $(F,Y)$ and $(X,Y)$, respectively, and let $\ell\in[0,1]$
be a bounded loss. For any hypothesis $g\in\mathcal{G}$, define the risks
\[
R_S(g):=\mathbb{E}_{(F,Y)\sim P_S}\big[\ell(g(F),Y)\big],\qquad
R_T(g):=\mathbb{E}_{(X,Y)\sim P_T}\big[\ell(g(X),Y)\big].
\]

\paragraph{A Concrete Discrepancy.}
To quantify the surrogate--target mismatch, we instantiate $\mathrm{Div}_{\mathcal{G}}(P_S,P_T)$ using an
integral probability metric (IPM)-style discrepancy~\cite{muller1997integral}.
Under the bounded cross-modal metric distortion Assumption~\ref{app:assump:alignment}, text-derived and video-derived
representations corresponding to the same semantic concepts are expected to induce similar prediction statistics, leading to a small surrogate--target discrepancy in the CLIP embedding space.

\begin{definition}[$\mathcal{G}$-IPM discrepancy]\label{def:ipm}
Let $\mathcal{G}$ be a class of measurable predictors $g:\mathbb{R}^{T\times d}\to[0,1]^T$.
Define
\begin{align}
\mathrm{IPM}_{\mathcal{G}}(P_S,P_T)
:=\sup_{g\in\mathcal{G}}
\left|
\mathbb{E}_{(F,Y)\sim P_S}\!\left[\ell(g(F),Y)\right]
-
\mathbb{E}_{(X,Y)\sim P_T}\!\left[\ell(g(X),Y)\right]
\right|.
\end{align}
\end{definition}

\begin{proposition}[Surrogate-to-Target Risk Decomposition]
\label{prop:surrogate_to_target}
Fix a hypothesis class $\mathcal{G}$ and a bounded loss $\ell\in[0,1]$.
For any $g\in\mathcal{G}$,
\begin{equation}
\label{eq:surrogate_bound}
R_T(g)
\;\le\;
R_S(g)
\;+\;
\mathrm{IPM}_{\mathcal{G}}(P_S,P_T)
\;+\;
\lambda^\star,
\end{equation}
where $\lambda^\star:=\inf_{h\in\mathcal{G}}\big(R_S(h)+R_T(h)\big)$ is the shared optimal joint error.
\end{proposition}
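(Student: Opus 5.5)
The bound follows almost directly from the definition of the $\mathcal{G}$-IPM discrepancy in Definition~\ref{def:ipm}, so we do not need the Ben-David style triangle inequality on the loss. Fix $g\in\mathcal{G}$. Since $g$ is one of the predictors over which the supremum in $\mathrm{IPM}_{\mathcal{G}}(P_S,P_T)$ is taken, we have
\[
R_T(g)-R_S(g)\;\le\;\bigl|R_S(g)-R_T(g)\bigr|\;\le\;\sup_{g'\in\mathcal{G}}\bigl|R_S(g')-R_T(g')\bigr|\;=\;\mathrm{IPM}_{\mathcal{G}}(P_S,P_T).
\]
Here the middle quantity is exactly the expression inside the supremum of Definition~\ref{def:ipm}, evaluated at $g'=g$. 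Rearranging gives $R_T(g)\le R_S(g)+\mathrm{IPM}_{\mathcal{G}}(P_S,P_T)$.

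It remains to add $\lambda^\star$. Because $\ell\in[0,1]$, both $R_S(h)\ge 0$ and $R_T(h)\ge 0$ for every $h\in\mathcal{G}$. Hence $R_S(h)+R_T(h)\ge 0$ for all $h$, and taking the infimum gives $\lambda^\star\ge 0$. Adding this nonnegative term to the right-hand side preserves the inequality, which yields \eqref{eq:surrogate_bound}.

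There are two technical points. First, all expectations must be well defined and finite; this holds because $\ell$ is bounded and $g$ is measurable, so $R_S$, $R_T$ and the IPM all lie in $[0,1]$. Second, the statement keeps $\lambda^\star$ only to match the familiar domain-adaptation form. One could instead give a Ben-David style proof: pass through the joint minimizer $h^\star$ and use a triangle inequality for $\ell$. That route would need $\ell$ to satisfy a triangle inequality, which is not assumed. The direct IPM argument avoids this obstacle entirely, so it is the route to take. The presence of $\lambda^\star$ then simply makes the bound slightly loose but valid.
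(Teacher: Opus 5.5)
Your proof is correct and matches the paper's argument: bound $R_T(g)-R_S(g)$ by $\mathrm{IPM}_{\mathcal{G}}(P_S,P_T)$, since $g$ is one of the predictors in the supremum, and then add the nonnegative $\lambda^\star$. Your version is slightly cleaner. The paper introduces a minimizer $g^\star\in\arg\min_{h}\big(R_S(h)+R_T(h)\big)$ whose existence is not guaranteed, whereas you only use that an infimum of nonnegative quantities is nonnegative.
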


\begin{proof}
Let
$g^\star \in \arg\min_{h\in\mathcal{G}} \big(R_S(h)+R_T(h)\big).$
We start by adding and subtracting the surrogate risk term:
\[
R_T(g)
= R_S(g) + \big(R_T(g)-R_S(g)\big).
\]
By Definition~\ref{def:ipm} (the $\mathcal{G}$-IPM discrepancy), for any fixed $g\in\mathcal{G}$ we have
\[
\big|R_T(g)-R_S(g)\big|
=
\left|
\mathbb{E}_{(X,Y)\sim P_T}[\ell(g(X),Y)]
-
\mathbb{E}_{(F,Y)\sim P_S}[\ell(g(F),Y)]
\right|
\le \mathrm{IPM}_{\mathcal{G}}(P_S,P_T).
\]
Hence,
$
R_T(g)
\le R_S(g) + \mathrm{IPM}_{\mathcal{G}}(P_S,P_T).
$
To introduce the shared optimal joint error $\lambda^\star$, note that
\[
\lambda^\star
:= \inf_{h\in\mathcal{G}} \big(R_S(h)+R_T(h)\big)
= R_S(g^\star)+R_T(g^\star),
\]
and since risks are nonnegative, $\lambda^\star \ge 0$. Therefore, adding $\lambda^\star$ to the
right-hand side preserves the inequality:
\[
R_T(g)
\le R_S(g) + \mathrm{IPM}_{\mathcal{G}}(P_S,P_T) + \lambda^\star.
\]
This is exactly \eqref{eq:surrogate_bound}.
\end{proof}

\paragraph{Interpretation for TD-VAD.}
Eq.~\eqref{eq:surrogate_bound} formalizes \emph{text as surrogate data}: minimizing surrogate risk
$R_S$ yields small target risk $R_T$ when (i) the surrogate and target distributions are close in the
CLIP embedding space (small $\mathrm{IPM}_{\mathcal{G}}$), and (ii) there exists a predictor that performs
well on both domains (small $\lambda^\star$).
Both conditions are enabled by the bounded cross-modal metric distortion of CLIP, which limits
semantic and geometric mismatch between text and video representations.
In TD-VAD, both modalities are embedded by frozen CLIP encoders,
and inference explicitly feeds video-frame embeddings into the text-trained model.

\subsection{Stability of Modality Substitution}
\label{app:subsec:substitution_stability} 

Unlike the surrogate-to-target analysis in
Section~\ref{app:subsec:surrogate_generalization}, which relates
distribution-level risks, this subsection focuses on an
instance-level stability property of the learned predictor.
Specifically, we analyze the effect of replacing the surrogate
(text-derived) embeddings with target (video-derived) embeddings at
test time and show that, under bounded cross-modal metric distortion,
modality substitution can be interpreted as a controlled perturbation
in the representation space.

\begin{definition}[Concept-wise alignment error]\label{def:alignment_error}
Let $\mu_{\mathrm{t}}(c):=\mathbb{E}[F\mid c]$ and
$\mu_{\mathrm{v}}(c):=\mathbb{E}[X\mid c]$ denote the class-conditional
prototypes of text and video embeddings, respectively, in the shared
CLIP space. We define the alignment error as
\begin{align*}
\varepsilon := \sup_{c\in\mathcal{C}}
\left\| \mu_{\mathrm{t}}(c)-\mu_{\mathrm{v}}(c) \right\|_2 .
\end{align*}
\vspace{-1.0em}
\end{definition}

The alignment error quantifies the maximum discrepancy between text
and video representations of the same semantic concept and provides a
measure of cross-modal mismatch at the prototype level.

\begin{proposition}[Modality Substitution Stability]
\label{prop:substitution_stability}
Assume that the temporal predictor $g_\theta$ is $L$-Lipschitz with
respect to its sequence input, i.e., for all
$A,B\in\mathbb{R}^{T\times d}$,
\[
\|g_\theta(A)-g_\theta(B)\|_2 \le L\|A-B\|_2 .
\]
If a test instance satisfies $\|F-X\|_2 \le \varepsilon$, then
\[
\|g_\theta(X)-g_\theta(F)\|_2 \le L\varepsilon .
\]
\end{proposition}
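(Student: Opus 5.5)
The claim follows directly from the Lipschitz hypothesis, so the plan is to specialize it to the pair $(A,B)=(X,F)$ and then chain it with the instance-level closeness assumption. Concretely, both $F=\mathcal{A}(\phi_{\mathrm{t}}(S))$ and $X=\mathcal{A}(\phi_{\mathrm{v}}(V_{\mathrm{test}}))$ lie in $\mathbb{R}^{T\times d}$ after the common temporal alignment operator $\mathcal{A}$ of Definition~\ref{def:surrogate_substitution}. They are therefore admissible inputs to $g_\theta$, and the Lipschitz inequality applies to them. This gives
\[
\|g_\theta(X)-g_\theta(F)\|_2 \le L\,\|X-F\|_2 = L\,\|F-X\|_2,
\]
using only the symmetry of the norm.

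The second step substitutes the hypothesis $\|F-X\|_2\le\varepsilon$. Since $L\ge 0$, multiplying the inequality by $L$ preserves its direction, and we obtain $\|g_\theta(X)-g_\theta(F)\|_2\le L\varepsilon$.

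The only point needing care is consistency of the norm on $\mathbb{R}^{T\times d}$. The same $\|\cdot\|_2$ must be used in the Lipschitz condition and in the closeness hypothesis; I would read it as the Frobenius norm, i.e.\ the Euclidean norm on the vectorized sequence. If a different matrix norm were intended in one of the two places, a norm-equivalence constant would enter, but as stated both use the same symbol and no constant is needed.

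I would also remark, without it being needed for the proof, how this connects to Assumption~\ref{app:assump:alignment}. That assumption guarantees $\|F-X\|_2\le\varepsilon$ with probability at least $1-\rho$, so the conclusion holds with the same probability over a random test instance. There is no genuine obstacle here: the statement is deterministic given its hypotheses, and the proof is a one-line application of the Lipschitz property.
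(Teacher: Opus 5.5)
Your proof is correct and is essentially the paper's proof: instantiate the Lipschitz inequality at $(A,B)=(X,F)$ and substitute $\|F-X\|_2\le\varepsilon$. The remarks on norm consistency and on the link to the alignment assumption are not needed for the argument. The latter remark would also strictly require the assumption's bound to hold for the aligned sequences $\mathcal{A}(\cdot)$, not the raw embeddings.
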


\begin{proof}
The result follows directly from the Lipschitz continuity of
$g_\theta$ by setting $(A,B)=(X,F)$.
\end{proof}

\paragraph{Discussion.}
Proposition~\ref{prop:substitution_stability} establishes a
pointwise stability guarantee: when text-derived and video-derived
embeddings corresponding to the same semantic concept are close in the
representation space, replacing $F$ with $X$ induces a bounded change
in the model predictions. This result does not rely on any
distributional assumptions and complements the surrogate-to-target
generalization analysis in
Section~\ref{app:subsec:surrogate_generalization} by justifying the
modality substitution step at the functional level.

\subsection{Surrogate Construction Gaps}
\label{app:subsec:surrogate_gaps}

Beyond cross-modal alignment, the surrogate domain itself is
\emph{constructed} rather than naturally observed.
In TD-VAD, temporally ordered text descriptions are generated by an
LLM and subsequently transformed to a fixed length for frame-level
processing.
This construction process introduces additional sources of mismatch
that are not captured by cross-modal alignment alone.

To disentangle these effects, we conceptually decompose the
surrogate-to-target gap as
\[
R_T(g)
\;\le\;
R_S(g)
+
\Delta_{\mathrm{gen}}
+
\Delta_{\mathrm{temp}}
+
\Delta_{\mathrm{align}} .
\]
This decomposition serves as an analytical tool rather than a tight
generalization bound.

\paragraph{Cross-modal alignment error.}
The term $\Delta_{\mathrm{align}}$ captures residual mismatch between
text-derived and video-derived representations in the shared CLIP
embedding space and is controlled by the alignment error $\varepsilon$
(Definition~\ref{def:alignment_error}) under the bounded cross-modal
metric distortion assumption (Assumption~\ref{assump:alignment}).
Its effect on predictions is analyzed in
Section~\ref{app:subsec:substitution_stability}.

\paragraph{Temporal alignment error.}
Let $F^\star \in \mathbb{R}^{T^\star \times d}$ denote the original
text-derived embedding sequence prior to temporal alignment, and let
$\mathcal{A}$ denotes the deterministic alignment operator that maps
$F^\star$ to a fixed-length sequence $F=\mathcal{A}(F^\star)$.
We define the temporal alignment error as
\[
\Delta_{\mathrm{temp}}
:=
\mathbb{E}\!\left[
\ell\big(g(\mathcal{A}(F^\star)),Y\big)
-
\ell\big(g(F^\star),Y\big)
\right],
\]
which quantifies the distortion introduced by enforcing frame-level
temporal structure through repetition, padding, or interpolation.

\paragraph{Generation error.}
Let $P_t^\star$ denote an idealized text distribution that perfectly
captures the semantic and temporal variability of video events, and
let $P_t^{\mathrm{LLM}}$ denote the distribution induced by LLM-based
text generation.
We define the generation error as
\[
\Delta_{\mathrm{gen}}
:=
\sup_{g\in\mathcal{G}}
\left|
\mathbb{E}_{(F,Y)\sim P_t^{\mathrm{LLM}}}
[\ell(g(F),Y)]
-
\mathbb{E}_{(F,Y)\sim P_t^\star}
[\ell(g(F),Y)]
\right|,
\]
which captures the distributional bias introduced by the text generation
process.

\paragraph{Practical implication.}
This decomposition clarifies that, while cross-modal alignment and
modality substitution stability governs representation transfer. Overall performance in the vision-free setting is also influenced by
temporal alignment design and the quality of the LLM-generated surrogate
data.

\section{More Results}\label{app:sec:more:results}
In this supplementary material, we provide the exact form of the prompt details for LLM to generate text descriptions for events, class distributions, and representative examples of the generated text corpus. 


\textbf{Prompt Detail}. To mimic the temporal structure of video events, we design a prompt instructing the LLM to produce sequential descriptions. The prompt differs for XD-Violence~\cite{wu2020not} and UCF-Crime~\cite{sultani2018real}. For XD-Violence, the prompt is structured as:
\begin{quote}
\textit{``You are a professional video event timeline designer. Generate 4 sequential sentences describing a [CLASS] event. Follow these strict rules: 1. Each sentence must depict a completely unique visual scene with no repeated actions, objects, or elements from previous descriptions; 2. Timeline logic must be natural: start$\rightarrow$development$\rightarrow$climax$\rightarrow$resolution (for abnormal events) or causal sequence (for normal events); 3. Use present tense and concrete language - describe only what can be visually observed; 4. Each sentence must be 15-25 words long with a varied structure. 5. Include specific details (colors, locations, actions) to ensure uniqueness; 6. Never reuse phrases from the examples provided in the template.''}
\end{quote}
Additionally, for UCF-Crime, the prompt has a similar form, which is structured as follows:
\begin{quote}
\textit{``Generate 4 sequential surveillance camera descriptions of a '[CLASS]' event. Ensure each description is 20-35 words from a security camera perspective. Create logical progression: detection$\rightarrow$incident development$\rightarrow$climax$\rightarrow$resolution. Include surveillance-specific details: camera angle, monitoring coverage, and visibility limitations. Focus on visible behavior only - no internal thoughts or off-camera events. Use surveillance terminology: CCTV, security footage, monitoring system, camera view. Make this sample unique for surveillance training data.''}
\end{quote}

%

\textbf{Prompt Sensitivity Analysis}. We evaluate the proposed TD-VAD approach's robustness to various prompt formulations while keeping the core information unchanged. This reveals whether LLM depends on specific prompt engineering or has genuine semantic understanding for anomalous events. We test three prompt styles: (1) Law Enforcement uses professional surveillance terminology; (2) Education adopts explanatory language for teaching; (3) Domain expert incorporates specialized security vocabulary. As shown in Table~\ref{tab:prompt_sensitivity_ucf}, performance varies within $\pm1\%$ AUC across all styles, demonstrating robust stability. These results validate our prompt design for generating anomalous descriptions via LLM while conforming to stability across prompt formulations.   
\begin{table}[h]
\centering
\caption{Prompt sensitivity analysis on XD-Violence dataset. Baseline uses the video event timeline designer style.}
\begin{tabular}{cc}
\toprule
Prompt Style          & AUC Change (\%) \\
\midrule
Law Enforcement Style    & -0.50           \\
Educational Style     & +0.02           \\
Domain Expert Style   & -0.97          \\
\bottomrule
\end{tabular}

\label{tab:prompt_sensitivity_ucf}
\end{table}

\textbf{Robustness to Noisy Descriptions}. To validate the robustness to noisy text descriptions, we randomly remove 10\%–40\% of temporal text segments during training to simulate incomplete and missing descriptions from LLMs. As shown in Table~\ref{tab:temporal_text_ablation}, interestingly, the TD-VAD model achieves slightly better performance rather than degradation. This demonstrates that our approach is highly robust to incomplete textual narratives. The model focuses on learning the core semantic concepts of anomalies instead of relying on full temporal details. The mild information dropout acts as gentle regularization that reduces overfitting to text patterns, leading to improved generalization on the test set.

\begin{table}[h]
\centering
\caption{Effect of removing varying percentages of temporal text on the XD-Violence dataset. Values in parentheses indicate the change compared to the baseline.}
\begin{tabular}{lcc}
\toprule
& AUC & AP \\
\midrule
Remove 10\%  & 89.65(+0.15) & 76.19(+0.36) \\
Remove 20\%  & 89.51(+0.01) & 75.84(+0.01) \\
Remove 30\%  & 89.65(+0.15) & 76.18(+0.35) \\
Remove 40\%  & 88.98(-0.52) & 74.32(-1.51) \\
\bottomrule
\end{tabular}

\label{tab:temporal_text_ablation}
\end{table}

\textbf{Effect of Different LLMs}. To investigate the influence of different text data generation models, we further used Kimi-k2.5~\cite{team2026kimi} and GPT-3~\cite{brown2020language} to generate the text dataset. As shown in Table~\ref{tab:llm_comparison}, we find that the model based on the text dataset generated by DeepSeek-V3 used in the manuscript works best. Specifically, both Kimi-k2.5 and GPT-3 lead to performance drops compared with our baseline, demonstrating that DeepSeek-V3~\cite{liu2024deepseek} is more capable of producing high-quality, semantically accurate, and video-aligned text descriptions. The relatively superior performance of GPT-3 over Kimi-k2.5 also indicates that the logical expression and contextual modeling capabilities of LLMs affect the quality of generated text supervision. These results validate that choosing an appropriate LLM for text dataset construction is crucial in our task.

\begin{table}[h]
\centering
\caption{Performance comparison of different large language models on the XD-Violence dataset. Values in parentheses indicate the change compared to the DeepSeek-V3 baseline.}
\begin{tabular}{ccc}
\toprule
LLM & AUC & AP \\
\midrule
Kimi-k2.5~\cite{team2026kimi} & 89.02(-0.48) & 74.50(-0.33) \\
GPT-3~\cite{brown2020language}     & 89.42(-0.08) & 75.54(-0.29) \\
\bottomrule
\end{tabular}

\label{tab:llm_comparison}
\end{table}

\textbf{Effect of Different Vision-Language Models}. To investigate the influence of different vision-language models (VLMs) for alignment across video and text modalities, we evaluate our approach with CLIP and two SigLIP backbones~\cite{zhai2023sigmoid} on XD-Violence in Table~\ref{tab:encoder_comparison}. While SigLIP excels on general image-text benchmarks via its pairwise sigmoid loss, CLIP's global softmax contrastive alignment better captures the fine-grained, long-range visual-text correspondences critical for temporal anomaly detection. This explains why CLIP achieves higher performance in our video anomaly detection task.  

\begin{table}[h]
\centering
\caption{Performance comparison of different VLMs on the XD-Violence dataset. Values in parentheses indicate the change compared to the CLIP baseline.}
\begin{tabular}{ccc}
\toprule
VLM & AUC & AP \\
\midrule
CLIP~\cite{radford2021learning} & 89.50 & 75.83 \\
Siglip-base-patch16-224~\cite{zhai2023sigmoid} & 87.19 (-2.31) & 67.55 (-8.28) \\
Siglip-base-patch16-512~\cite{zhai2023sigmoid} & 89.18 (-0.32) & 71.84 (-3.99) \\
\bottomrule
\end{tabular}

\label{tab:encoder_comparison}
\end{table}

\textbf{Limitation Analysis}. We present the per-category AUC performance using the multi-class anomaly-aware branch in Table~\ref{tab:per_class_auc}. Our approach achieves consistent performance across most anomaly categories, showing reliable capability in detecting anomalies. However, as discussed in Section~\ref{sec:Analyses} and illustrated in Fig.~\ref{fig:tsne}, \textit{Abuse} and \textit{Shooting} share highly similar violent semantics and visual patterns with other anomalous events, leading to higher fine-grained classification difficulty. This high semantic overlap creates a more challenging distinction for the proposed TD-VAD approach, resulting in relatively lower raw performance for these two fine-grained anomaly categories. In future work, we plan to design a dedicated cross-modal alignment paradigm tailored for video anomaly detection to further refine fine-grained subtle event modeling.   

\begin{table}[h]
\centering
\caption{Per-class AUC performance on the XD-Violence dataset.}
\begin{tabular}{cc}
\toprule
Category & AUC \\
\midrule
Normal        & 89.22 \\
Fighting      & 86.46 \\
Shooting      & 72.68 \\
Riot          & 96.10 \\
Abuse         & 61.47 \\
Car Accident  & 84.67 \\
Explosion     & 95.03 \\
\bottomrule
\end{tabular}

\label{tab:per_class_auc}
\end{table}

\textbf{Visualization of Text Features}. We visualize the feature distribution of generated text descriptions by using t-SNE, and present results in Fig.~\ref{fig:multi-tsne}. As we can see, the features of text descriptions generated by LLM have distinguishable boundaries, show features of texts that describe a certain type of anomaly event, and exhibit categorical discriminability. It demonstrates the effectiveness of using text data generated in training.

\textbf{Class Distribution and Examples for Text Corpus}. In Table~\ref{xd_counting} and Table~\ref{ucf_counting}, we present the detailed amounts of generated text descriptions for different categories in the XD-Violence and UCF-Crime datasets, respectively. We allocate more generated samples for usual anomaly categories, yet the rare anomaly types receive fewer but representative samples. This strategy ensures both diversity and balance in the textual corpus to reflect the distribution of real-world anomalies. 
\begin{table}[ht]
  \centering
    \caption{The amount of generated text descriptions for different categories in XD-Violence and UCF-Crime datasets}
  \makebox[0.95\linewidth][c]{%
    \subcaptionbox{XD-Violence dataset\label{xd_counting}}{%
      \renewcommand{\arraystretch}{0.8} 
      \resizebox{0.40\linewidth}{!}{
        \begin{tabular}{c|c || c|c}
          \toprule
          Category     & Amount &  Category   & Amount \\ 
          \midrule
          Normal       & 12450  & Shooting     & 559 \\
          Car accident & 2204   & Explosion    & 505 \\
          Fighting     & 1066   & Riot         & 200 \\
          Abuse        & 690    & -            & -    \\
          \bottomrule
        \end{tabular}%
      }%
    }
    \hfill 
    \subcaptionbox{UCF-Crime dataset\label{ucf_counting}}{%
      \renewcommand{\arraystretch}{0.9} 
      \resizebox{0.40\linewidth}{!}{
        \begin{tabular}{c|c || c|c}
          \toprule
          Category       & Amount &  Category   & Amount \\ 
          \midrule
          Normal         & 483    &  Shoplifting  & 50 \\
          Road accidents & 1756   &  Arrest       & 60 \\
          Explosion      & 282    &  Assault      & 60 \\
          Fighting       & 135    &  Vandalism    & 60 \\
          Shooting       & 90     &  Burglary     & 50 \\
          Abuse          & 77     &  Robbery      & 50 \\
          Arson          & 60     &  Stealing     & 40 \\         
          \bottomrule
        \end{tabular}%
      }%
    }%
  }
  \vspace{0.5em} 

  \label{tab:dataset_text_amount} 
\end{table}

\begin{figure}[t]
  \centering
   \includegraphics[width=0.5\linewidth]
   {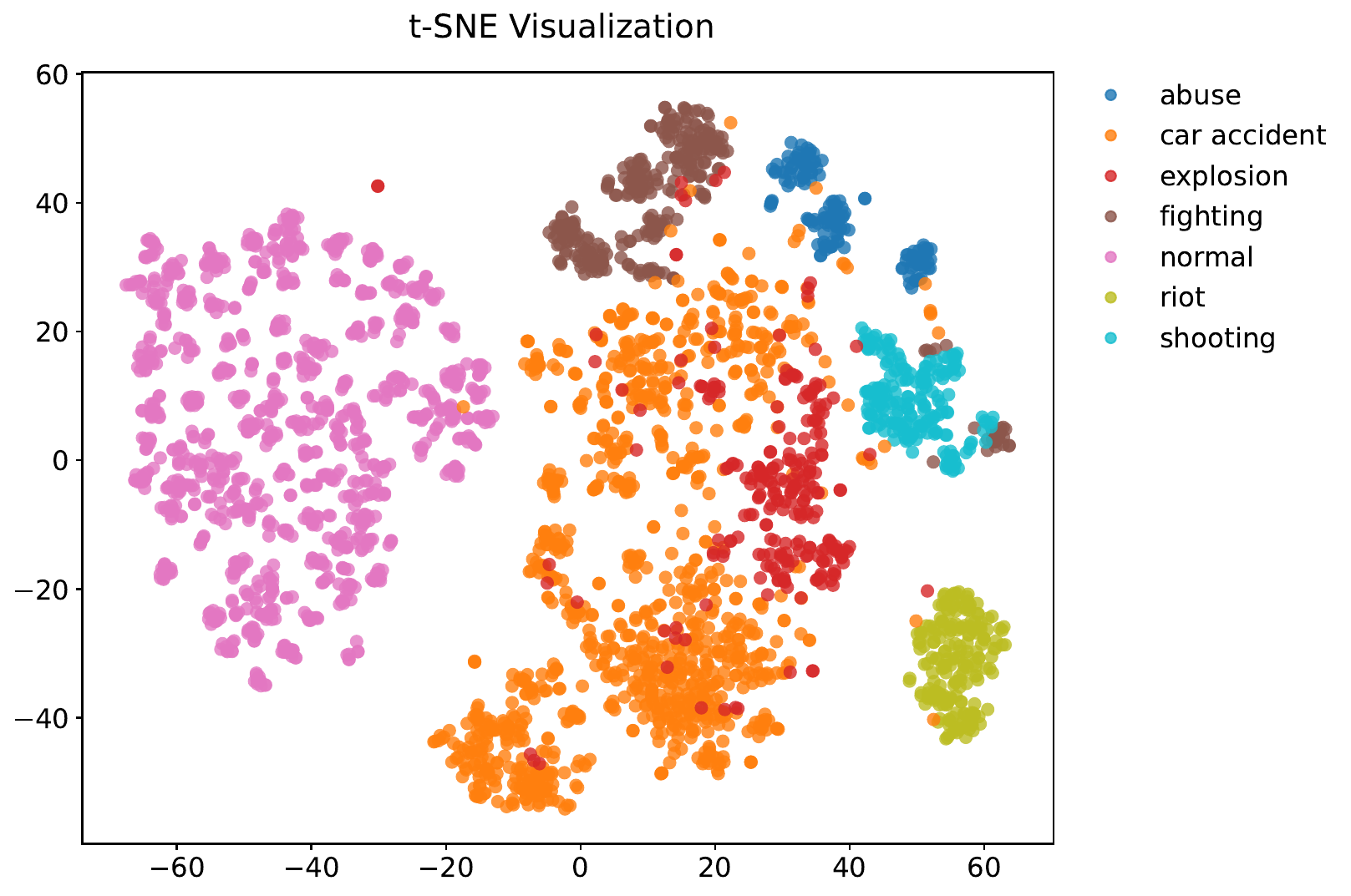}
   \caption{The Visualization of the normal and abnormal embeddings of generated text descriptions in 2D space by t-SNE.  }
   \label{fig:multi-tsne}
\end{figure}

In Fig.~\ref{fig:xd_example} and Fig.~\ref{fig:ucf_example}, we present the examples of generated text descriptions via LLM in the XD-Violence and UCF-Crime dataset, respectively. These generated examples exhibit a clear temporal progression to mimic the temporal structure of different video events. \textit{As illustrated in Fig.~\ref{fig:xd_example} and Fig.~\ref{fig:ucf_example}, the generated text descriptions are not limited to describing actions alone, which inherently encode the coupling of actions, scenes, and spatial-temporal contexts, which forces the model to learn “context-dependent anomaly” rather than “action-only anomaly”.} For example, a clear illustration in in Fig.~\ref{fig:xd_example} lies in the identical run action across our anomalous and normal text descriptions: the abuse-related text “She breaks free and runs toward the main street's bright lights as bystanders begin gathering near the alley's entrance" embeds run in abnormality-indicative context cues—the subject’s "breaking free" implies a prior threatening interaction, the dark alley connotes a high-risk scenario, and bystanders’ gathering reflects a response to deviance. By contrast, the normal text “As evening approaches, two children run laughing through the sprawling green park while their parents watch from a bench" denotes the same run action with normality signatures, where the tranquil park setting, children’s joyful affect, and parental supervision collectively denote a routine, non-deviant scenario.



\begin{figure*}[t]
  \centering

   \includegraphics[width=1\linewidth]{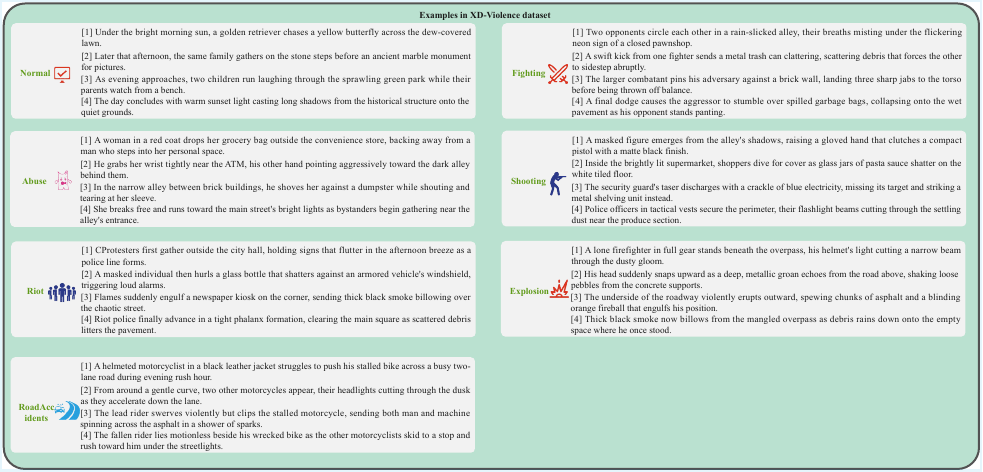}

   \caption{Examples of generated text descriptions via LLM in the XD-Violence dataset. }
   \label{fig:xd_example}
\end{figure*}

\begin{figure*}[t]
  \centering

   \includegraphics[width=1\linewidth]{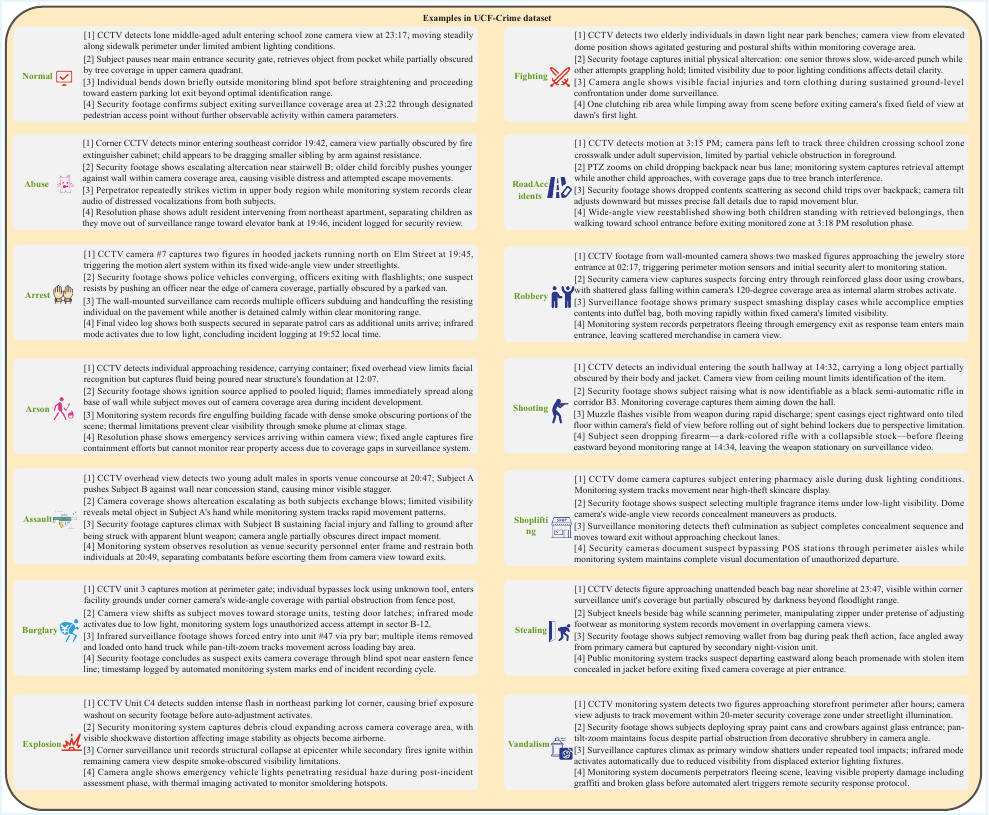}

   \caption{Examples of generated text descriptions via LLM in the UCF-Crime dataset. }
   \label{fig:ucf_example}
\end{figure*}


\end{document}